%
\documentclass{article}
\usepackage{subfig}
\usepackage[labelfont=bf, justification=justified]{caption}

\usepackage{spconf,amsmath,graphicx}
\usepackage{amsmath}
\usepackage{amsthm}
\usepackage{amsfonts}
\usepackage{bbm}
\usepackage{ifpdf}
\usepackage{float}
\usepackage{fancyhdr}
\usepackage{xcolor}
\usepackage{multirow}
\usepackage{bm}
\usepackage[toc,page]{appendix}
\usepackage{hyperref}

\title{Fast graph kernel with optical random features}
%
\name{ \qquad Hashem Ghanem\qquad Nicolas Keriven \qquad Nicolas Tremblay\thanks{The authors warmly thank LightOn (\url{https://lighton.ai}) for the use of the OPU. HG is supported by the LabEx PERSYVAL (ANR-11-LABX-0025-01) \emph{``Bourse d'excellence''}. NT is partly supported by the French National Research Agency in the framework of the "Investissements d’avenir” program (ANR-15-IDEX-02) and the LabEx PERSYVAL (ANR-11-LABX-0025-01); as well as  the ANR GraVa (ANR-18-CE40-0005).} }

\address{ CNRS, GIPSA-lab, FR-38402 Saint Martin d’Heres Cedex, France}
%
%

%
\begin{document}
%
\newtheorem{theorem}{Theorem} 
\maketitle
\begin{abstract}
The graphlet kernel is a classical method in graph classification. It however suffers from a high computation cost due to the isomorphism test it includes. As a generic proxy, and in general at the cost of losing some information, this test can be efficiently replaced by a user-defined mapping that computes various graph characteristics. In this paper, we propose to leverage \emph{kernel random features} within the graphlet framework, and establish a theoretical link with a mean kernel metric. If this method can still be prohibitively costly for usual random features, we then incorporate \emph{optical} random features that can be computed in \emph{constant time}. Experiments show that the resulting algorithm is orders of magnitude faster that the graphlet kernel for the same, or better, accuracy.
\end{abstract}
\begin{keywords}
Optical random features, Graph kernels
\end{keywords}
\section{Introduction}
\label{sec:intro}
In mathematics and data science, graphs are used to model a set of objects (the \emph{nodes}) and their interactions (the \emph{edges}). 
Given a set of pre-labeled graphs $(\mathcal{X}=\{\G_1,\ldots,\G_n\}, \mathcal{Y}=\{y_1,\ldots,y_n\})$, where each graph $\G_i$ belongs to the class with label $y_i$, graph classification consists in designing an algorithm that outputs the class label of a new graph.
For instance, proteins can be modeled as graphs: amino acids are nodes and the chemical links between them are edges. They can be classified to enzymes and non-enzymes \cite{protein_application}.
In social networks analysis, post threads can be modeled with graphs whose nodes are users and edges are replies to others' comment \cite{graph_soc_net}. One task is then to discriminate between discussion-based and question/answer-based threads \cite{class_Reddit}.
%
In addition to the graph structure, nodes and edges may have extra features. While it has been shown that node features are important to obtain high classification performance \cite{node_features}, here we focus on the case where one has only access to the graph structure.

Structure-based graph classification has been tackled with many algorithms. Frequent subgraphs based algorithms \cite{frequent_subgraphs} analyze the graph dataset $\mathcal{X}$ to catch the frequent and discriminative subgraphs and use them as features. Kernel-based algorithms \cite{kriege_graph_kernels} can be used by defining similarity functions (kernels) between graphs. An early and popular example is the \emph{graphlet kernel}, which computes frequencies of subgraphs. It is however known to be quite costly to compute \cite{graphlet_kernel}, in particular due to the presence of graph isomorphism tests. While possible in particular cases \cite{graphlet_kernel}, accelerating the graphlet kernel for arbitrary datasets remains open.
Finally, graph neural networks (GNNs) \cite{Bronstein2017} have recently become very popular in graph machine learning. They are however known to exhibit limited performance when node features are unavailable \cite{GCN_powerful}. 

In kernel methods, random features are an efficient approximation method~\cite{rahimi2008random, RF_1}. Recently, it has been shown \cite{saade_opu} that \emph{optical computing} can be leveraged to compute such random features in \emph{constant time} in \emph{any dimension} -- within the limitations of the current hardware, here referred to as Optical Processing Units (OPUs).
The main goal of this paper is to provide a proof-of-concept answer to the following question: can OPU computations be used to reduce the computational complexity of a combinatorial problem like the graphlet kernel? Drawing on a connection with mean kernels and Maximum Mean Discrepancy (MMD) \cite{Gretton2007}, we show, empirically and theoretically, that a fast and efficient graph classifier can indeed be obtained with OPU computations.

\section{Background}
\label{sec:background}
First, we present the concepts necessary to define the graphlet kernel. We represent a graph of size $v$ by the adjacency matrix $\mathbf{A}\in \{0,1\}^{v\times v}$, such that $a_{i,j} =1$ if there is an edge between nodes $\{i,j\}$ and $0$ otherwise. Two graphs are said to be isomorphic ($\G\cong \G')$ if we can permute the nodes of one such that their adjacency matrices are equal \cite{isomorphism}. 

\subsection{Isomorphic graphlets}

In this paper, we will, depending on the context, manipulate two different notions of $k$-graphlets (that is, small graphs of size $k$): with or without discriminating isomorphic graphlets. We denote by $\bar{\mathfrak{H}}=\{\bar{\phlet}_1,..., \bar{\phlet}_{\bar N_k}\}$ with $\bar{N}_k = 2^{\frac{k(k-1)}{2}}$ the set of all size-$k$ graphs, where isomorphic graphs are counted multiple times, and $\mathfrak{H}=\{\phlet_1,..., \phlet_{N_k}\} \subset \bar{\mathfrak{H}}$ the set of all non-isomorphic graphs of size $k$. Its size $N_k$ has a (quite verbose) closed-from expression \cite{oeis}, but is still exponential in $k$.
In the course of this paper, we shall manipulate mappings $\varphi(\mathcal{H})$ and probability distributions (histograms) over graphlets. By default the underlying space will be $\bar{\mathfrak{H}}$, however when the mapping $\varphi$ is \emph{permutation-invariant}, then the underlying space can be thought of as $\mathfrak{H}$. 
Also note that, assuming each isomorphic copies has equal probability, a probability distribution over $\bar{\mathfrak{H}}$ can be \emph{folded} into one over $\mathfrak{H}$, and both distributions \emph{contain the same amount of information}.

\subsection{The graphlet kernel}

The traditional graphlet kernel is defined by computing histograms of subgraphs over non-isomorphic graphlets $\mathfrak{H}$. We define the matching function $\varphi_k^{match}(\mathcal{F}) = \left[ 1_{(\mathcal{F} \cong \phlet_i)}\right]_{i=1}^{N_k} \in \{0,1\}^{N_k}$, where $\mathcal{F}$ is a graph of size $k$. In words, $\match(\mathcal{F})$ is a one-hot vector of dimension $N_k$ identifying $\mathcal{F}$ up to isomorphism. Note that the cost of evaluating $\varphi_k^{match}$ once is $O\left(N_k C^{\cong}_k\right)$, where $C^{\cong}_k$ is the cost of the isomorphism test between two graphs of size $k$, for which no polynomial algorithm is known \cite{isomorphism_np}. 
Given a graph $\G$ of size $v$, let $\mathfrak{F}_\G=\{\mathcal{F}_1,\mathcal{F}_2,\ldots,\mathcal{F}_{\binom{v}{k}}\}$ be the collection of subgraphs induced by all size-$k$ subsets of nodes. The following representation vector is called the $k$-spectrum of $\G$:
\begin{equation}
\label{eq:gk}
\mathsmaller{\mathbf{f}_\G=\binom{v}{k}^{-1}\sum_{\mathcal{F}\in\mathfrak{F}_\mathcal{G}} \match (\mathcal{F}) \in \R^{N_k}}
\end{equation}
For two graphs $\G,\G'$, the graphlet kernel \cite{graphlet_kernel} is then defined as $\bld{f}_\G^T\bld{f}_{\G'}$. 
For a graph of size $v$, the computation cost of $\mathbf{f}_\G$ is $C_{gk}= \mathcal{O}\left(\tbinom{v}{k}N_k C^{\cong}_k\right)$. This cost is usually prohibitively expensive, since each three terms are exponential in $k$.

Subgraph sampling is generally used as a first step to accelerate (and sometimes modify) the graphlet kernel \cite{graphlet_kernel}. Given a graph $\G$, we denote by $S_k(\G)$ a sampling process that yields a random subgraph of $\G$, seen as a probability distribution over $\bar{\mathfrak{H}}$. 
Then, sampling $s$ subgraphs $\hat{\mathfrak{F}}_\G = \{F_1,...,F_s\}$ $i.i.d.$ from $S_k(\G)$, we define the estimator:
\begin{align}
	\label{eq:fhat_unif}
	\mathsmaller{\hat{\mathbf{f}}_{\mathcal{G},S_k} =s^{-1}\sum_{F\in\hat{\mathfrak{F}}_\G} \varphi^{match}_k(F).}
\end{align}
and its expectation $\mathbf{f}_{\mathcal{G},S} = \mathbb{E}_{F \sim S_k(\G)} ~\varphi^{match}_k(F)$, which is nothing more than the \emph{folding} of the distribution $S_k(\G)$ over non-isomorphic graphlets $\mathfrak{H}$. For any sampler, we refer to these expectations as graphlet kernels. In all generality, any choice of sampling procedure $S_k$ yields a different definition of graphlet kernel. For instance, if one considers uniform sampling ($S^{\rm unif}$: independently samples $k$ nodes of $\G$ without replacement), then one obtains the original graphlet kernel of Eq.~\eqref{eq:gk}: $\mathbf{f}_{\G, S^{\rm unif}} = \mathbf{f}_\G$. Other choices of sampling procedures are possible~\cite{leskovec2006sampling}. In this paper, we will also use the random walk (RW) sampler, which, unlike uniform sampling, tends to sample connected subgraphs. 
The computation cost per graph of the approximate graphlet kernel of Eq.~\eqref{eq:fhat_unif} is $C_{gk + gs}= \mathcal{O}\left(s C_S N_k C^{\cong}_k\right)$, where $C_S$ is the cost of sampling one subgraph. 
For a fixed error in estimating $\bld{f}_{\G,S}$, the required number of samples $s$ generally needs to be proportional to $N_k$ \cite{graphlet_kernel}, which unfortunately still yields a generally unaffordable algorithm.

\begin{algorithm}
	\DontPrintSemicolon
	\KwInput{labeled graph dataset $\mathcal{X}=(\G_i,y_i)_{i=1,\ldots,n}$}
	\tools{Graphlet sampler $S_k$, a function $\varphi:\bar{\mathfrak{H}}\to \mathbb{R}^m$, a linear classifier (ex. SVM) }\\
	\Hyp{$s$: number of graphlet samples}\\
	\KwOutput{Trained model to classify graphs}
	\Algo{\\}
	\For{$\G_i$ in $\mathcal{X}$}{
		$\mathbf{f}_i=\mathbf{0}$ (null vector of size $m$) \\
		\For{$j=1:s$}{
			$F_{i,j}\gets S_k(\G_i)$\\
			$\mathbf{f}_i\gets \mathbf{f}_i +\frac{1}{s}\varphi(F_{i,j})$
		}
	}
	Train the classifier on the vector dataset $(\mathbf{f}_i,y_i)_{i=1}^n$
	\caption{GSA-$\varphi$ generic algorithm}\label{alg:gsa}
\end{algorithm}

\section{Graphlet kernel with optical maps} \label{ssed to get a lowerec:pagestyle}
\subsection{Proposed method}
\label{sec:algo}
In this paper, we focus on the main remaining bottleneck of the graphlet kernel, that is, the function $\match$. We define a framework where it is replaced with a user-defined map $\varphi: \bar{\mathfrak{H}} \to \mathbb{R}^m$, which leads to the final representation:
\begin{align}
	\label{eq:fhat_phi}
	\mathsmaller{\hat{\mathbf{f}}_{\mathcal{G},S_k,\varphi} =s^{-1}\sum_{F\in\hat{\mathfrak{F}}_\G} \varphi(F).}
\end{align}
The resulting methodology (Alg.~\ref{alg:gsa}) is referred to as \emph{Graphlet Sampling and Averaging} (GSA-$\varphi$). The cost of computing \eqref{eq:fhat_phi} is $C_{GSA-\varphi}= \mathcal{O}\left(s C_S C_{\varphi}\right)$, where $C_{\varphi}$ is the cost of applying $\varphi$.
%
%
Similar methods have been studied with $\varphi$ as simple graph statistics \cite{Dutta2018}, which unavoidably incurs information loss. We see next that choosing $\varphi$ as \emph{kernel random features} preserves information for a sufficient number of features. Some of these maps will \emph{not} be permutation-invariant at the graphlet level, however, in the infinite sample limit, it is easy to see that $\hat{\mathbf{f}}_{\mathcal{G},S_k,\varphi}$ is permutation-invariant at the graph level.

\subsection{Kernel random features with $GSA-\varphi$} 
\label{sec:MMD}

In the graphlet kernel, the underlying metric used to compare graphs is the Euclidean distance between graphlet histograms. When $\match$ is replaced by another $\varphi$, one compares certain \emph{embeddings} of distributions, which is reminiscent of kernel mean embeddings \cite{Gretton2007}. We show below that this corresponds to choosing $\varphi$ as kernel random features.

For two objects $\mathbf{x}, \mathbf{x'}$, a kernel $\kappa$ associated to a random features (RF) decomposition is a positive definite function that can be decomposed  as follows \cite{rahimi2008random}:
\begin{equation}
\label{eq:RF_decomposition}
\kappa(\mathbf{x},\mathbf{x}')=\mathbb{E}_{\mathbf{w}\sim p}[ \xi_\mathbf{w}(\mathbf{x})^* \xi_\mathbf{w}(\mathbf{x}')]
\end{equation}
where $\xi$ is a real (or complex) function parameterized by $\mathbf{w} \in \Omega$, and $p$ a probability distribution on $\Omega$. A classical example is the Fourier decomposition of translation-invariant kernels~\cite{rahimi2008random}.
The RF methodology then defines maps:
\begin{equation}
	\label{eq:def_RF}
	\mathsmaller{
	\varphi(\mathbf{x}) = m^{-1/2} ( \xi_{\mathbf{w}_j}(\mathbf{x}) )_{j=1}^m \in \mathbb{C}^m}
\end{equation}
where $m$ is the number of features and the parameters $\mathbf{w}_j$ are drawn iid from $p$. Then, $\kappa(\mathbf{x},\mathbf{x}')\approx	\varphi(\mathbf{x})^H	\varphi(\mathbf{x}') = m^{-1} \sum_j \xi_{\mathbf{w}_j}(\mathbf{x})^* \xi_{\mathbf{w}_j}(\mathbf{x}')$. 

Assume that we have a base kernel $\kappa(\F,\F')$ between \emph{graphlets}, with a RF decomposition $(\xi_\mathbf{w}, p)$, and define $\varphi$ as in \eqref{eq:def_RF}. Then, one can show \cite{Keriven2017a, Keriven2018} that the Euclidean distance between the embeddings \eqref{eq:fhat_phi} approximates the following \emph{Maximum Mean Discrepancy} (MMD) \cite{Gretton2007, Sriperumbudur2010} between distributions on $\bar{\mathfrak{H}}$:
 \begin{multline}\label{eq:MMD}
 \mathsmaller{
 \textup{MMD}^2(S_k(\G), S_k(\G'))} \\
 \mathsmaller{= \mathbb{E}_{\mathbf{w}} \left( \left| \mathbb{E}_{S_k(\G)} \xi_\mathbf{w}(F) - \mathbb{E}_{S_k(\G')} \xi_\mathbf{w}(F') \right|^2 \right)}
 \end{multline}
The main property of the MMD is that, for so-called \emph{characteristic kernels}, it is a true metric on distributions, \emph{i.e.} $\textup{MMD}(S_k(\G), S_k(\G')) = 0 \Leftrightarrow S_k(\G) = S_k(\G')$. 
Most usual kernels, like the Gaussian kernel, are characteristic \cite{Gretton2007, Sriperumbudur2010}.
%


\begin{theorem}\label{theorem:concentration}
Let $\G$ and $\G'$ be two graphs. 
Assume a random feature map as in \eqref{eq:def_RF}. Assume that $|\xi_\mathbf{w}(F)| \leq 1$ for any $\mathbf{w},F$.
We have for all $\delta>0$ and with probability at least $1-\delta$:
\begin{multline}\label{eq:bound}
\mathsmaller{
 \Big|\|\hat{\mathbf{f}}_{\G,S_k,\varphi} - \hat{\mathbf{f}}_{\G',S_k,\varphi} \|_2^2 - \textup{MMD}^2(S_k(\G),S_k(\G')) \Big| } \\
 \mathsmaller{ \leq 4 m^{-\frac12} \sqrt{\log (6/\delta)} + 8s^{-\frac12} \left(1+\sqrt{2\log(3/\delta)}\right)}
\end{multline}
\end{theorem}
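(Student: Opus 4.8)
The plan is to separate the two independent sources of randomness -- the draw of the $m$ feature parameters $\mathbf{w}_j$ and the draw of the $s$ graphlet samples -- and to treat each by an appropriate concentration inequality before combining them with a union bound. To this end I would introduce the \emph{population} embedding $\mathbf{f}_{\G,S_k,\varphi} := \mathbb{E}_{F\sim S_k(\G)}\varphi(F) \in \mathbb{C}^m$, i.e. the $s\to\infty$ limit of \eqref{eq:fhat_phi} for \emph{fixed} random features. Writing $\hat d = \hat{\mathbf{f}}_{\G,S_k,\varphi}-\hat{\mathbf{f}}_{\G',S_k,\varphi}$ and $d = \mathbf{f}_{\G,S_k,\varphi}-\mathbf{f}_{\G',S_k,\varphi}$, the triangle inequality splits the quantity to control as
\begin{equation*}
\big|\,\|\hat d\|_2^2 - \textup{MMD}^2\,\big| \le \underbrace{\big|\,\|\hat d\|_2^2 - \|d\|_2^2\,\big|}_{\text{sampling error }(s)} + \underbrace{\big|\,\|d\|_2^2 - \textup{MMD}^2\,\big|}_{\text{feature error }(m)}.
\end{equation*}
The feature term depends only on the $\mathbf{w}_j$, while the sampling term, conditionally on the $\mathbf{w}_j$, depends only on the $F_i$; I would budget a failure probability $\delta/3$ to the feature term and $\delta/3$ to each of the two graphs in the sampling term.

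For the feature error, observe that $\|d\|_2^2 = m^{-1}\sum_{j=1}^m Z_j$ with $Z_j = |\mathbb{E}_{S_k(\G)}\xi_{\mathbf{w}_j}(F)-\mathbb{E}_{S_k(\G')}\xi_{\mathbf{w}_j}(F')|^2$. The $Z_j$ are i.i.d., their common expectation is exactly $\textup{MMD}^2$ by \eqref{eq:MMD}, and $|\xi_\mathbf{w}|\le 1$ forces $Z_j\in[0,4]$. Hoeffding's inequality then controls this term by $4m^{-1/2}\sqrt{\log(6/\delta)}$ with probability at least $1-\delta/3$ (the $6$ arising from the two-sided bound combined with the $\delta/3$ budget), which is the first term of \eqref{eq:bound}.

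The sampling error is the delicate part, and I expect it to be the main obstacle, as it requires a \emph{dimension-free} concentration of the empirical mean as a vector in $\mathbb{C}^m$. The key preliminary observation is again $|\xi_\mathbf{w}|\le 1$, which gives $\|\varphi(F)\|_2\le 1$, hence $\|\hat{\mathbf{f}}_{\G,S_k,\varphi}\|_2,\|\mathbf{f}_{\G,S_k,\varphi}\|_2\le 1$ and thus $\|\hat d\|_2,\|d\|_2\le 2$. Using $\big|\|\hat d\|_2^2-\|d\|_2^2\big|\le \|\hat d - d\|_2\,\|\hat d + d\|_2\le 4\|\hat d - d\|_2$ reduces the problem to controlling $\|\hat{\mathbf{f}}_{\G,S_k,\varphi}-\mathbf{f}_{\G,S_k,\varphi}\|_2$ and its analogue for $\G'$. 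For this I would apply McDiarmid's inequality to $g(F_1,\dots,F_s)=\|s^{-1}\sum_i\varphi(F_i)-\mathbf{f}_{\G,S_k,\varphi}\|_2$: replacing one sample perturbs $g$ by at most $2/s$, so $g\le \mathbb{E} g + s^{-1/2}\sqrt{2\log(3/\delta)}$ with probability $1-\delta/3$. It then remains to bound $\mathbb{E} g$, which by Jensen and the i.i.d. variance identity $\mathbb{E}\|s^{-1}\sum_i\varphi(F_i)-\mathbf{f}_{\G,S_k,\varphi}\|_2^2 = s^{-1}\mathbb{E}\|\varphi(F)-\mathbf{f}_{\G,S_k,\varphi}\|_2^2\le s^{-1}$ gives $\mathbb{E} g\le s^{-1/2}$ \emph{independently of} $m$. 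Combining yields $\|\hat{\mathbf{f}}_{\G,S_k,\varphi}-\mathbf{f}_{\G,S_k,\varphi}\|_2\le s^{-1/2}(1+\sqrt{2\log(3/\delta)})$, and summing the two graph contributions gives $\big|\|\hat d\|_2^2-\|d\|_2^2\big|\le 8s^{-1/2}(1+\sqrt{2\log(3/\delta)})$, the second term of \eqref{eq:bound}; a final union bound over the three events of total probability $\delta$ closes the argument. The crux really is this third step: concentrating the $m$ coordinates separately would inject a spurious $\sqrt{\log m}$ factor, so the gain comes from treating $\hat{\mathbf{f}}-\mathbf{f}$ as a single vector and bounding its expected norm through the variance, which is precisely what makes the $s$-term dimension-free.
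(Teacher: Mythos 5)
Your proposal is correct and follows essentially the same route as the paper's proof: the same decomposition into a feature-error term (scalar Hoeffding on the i.i.d.\ $[0,4]$-bounded variables $Z_j$ with mean $\textup{MMD}^2$) and a per-graph sampling-error term, combined through $\big|\,\|\hat d\|^2-\|d\|^2\,\big|\le 4\|\hat d - d\|$ and a three-way union bound with identical constants, plus the same observation that the conditional (fixed-features) bound transfers to the joint probability. The only difference is that where the paper invokes the vector Hoeffding inequality of \cite[Lemma 4]{Rahimi2009} as a black box for the sampling term, you re-derive it via McDiarmid's inequality plus the Jensen/variance bound on the expected deviation---which is precisely how that cited lemma is proven, so the two arguments coincide.
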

\begin{proof}
	See the supplementary material~\cite{arxiv_version}.
\end{proof}
Hence, if two classes of graphs are well-separated in terms of the MMD \eqref{eq:MMD}, then, for sufficiently large $m,s$, GSA-$\varphi$ has the same classification power. However, according to \eqref{eq:bound}, $m$ should be of the order of $s$, and we have seen that the latter generally needs to be quite large:  most usual random feature scheme, typically in $C_\varphi =O(k^2 m)$, still have a high computation cost. We discuss next the use of \emph{optical hardware}.

\subsection{Considered choices of $\varphi_{RF}$}
\label{sec:phi_choices}
\textbf{Gaussian maps}: the RF map of the Gaussian kernel \cite{rahimi2008random}. 
\begin{equation}
\label{eq:Gaussian_map}
 \mathsmaller{\varphi_{Gs}(\F) = m^{-1/2} \left( \sqrt{2} \cos(\mathbf{w}_j^T\mathbf{a}_\F+b_j) \right)_{j=1}^m \in \mathbb{R}^m}
\end{equation}
where $\mathbf{a}_\F=flatten(\mathbf{A}_\F)$ is the vectorized adjacency matrix of the graphlet $\F$, 
the $\mathbf{w}_j\in \R^{k^2}$ are drawn from a Gaussian distribution and $b_j \sim \mathcal{U}([0, 2\pi])$. While using a Gaussian kernel on $\mathbf{a}_\mathcal{F}$ is not very intuitive, this will serve as a baseline for other methods. Note that $\varphi_{Gs}$ is not permutation-invariant.

\noindent\textbf{Gaussian maps applied on the sorted eigenvalues:} We consider a permutation-invariant alternative to the first case. 
For a graphlet $\F$ we denote the vector of its \emph{sorted} eigenvalues by $\bm{\lambda}(\F) \in\R^k$ and $\varphi_{Gs+eig}(\F) = \varphi_{Gs}(\bm{\lambda}(\F))$ (with $\mathbf{w}_j$ of dimension $k$). Note that the existence of co-spectral graphs, that is, non-isomorphic graphs with the same set of eigenvalues, implies a loss of information when computing $\bm{\lambda}(\F)$.

\noindent\textbf{Optical random feature maps:} Due to high-dimensional matrix multiplication, Gaussian RFs cost $\mathcal{O}(mk^2)$ and are notably expensive to compute in high-dimension (here mostly large $m$). To solve this, OPUs (Optical Processing Units) were recently developed to compute a specific random features mapping in \emph{constant time  $\mathcal{O}(1)$} using light scattering~\cite{saade_opu} -- within the physical limits of the OPU, currently of the order of a few millions for both input and output dimensions. Here we again consider the flattened adjacency matrix for simplicity. The OPU computes an operation of the type:
\[
\label{OPU_equation}
\mathbf{\varphi}_{OPU}(\F)= m^{-1/2}\left(|\mathbf{w}_j^T \mathbf{a}_\F+\mathbf{b}_j|^2\right)_{j=1}^m
\]
with $\mathbf{b}_j$ a random bias and $\mathbf{w}_j$ a complex vector with Gaussian real and imaginary parts. Both $\mathbf{w}_j, \mathbf{b}_j$ are here incurred by the physics and are unknown, however the corresponding kernel $\kappa(\F,\F')$ has a closed-form expression~\cite{saade_opu}.
Table \ref{tab:cost} summarizes the complexities of the mappings $\varphi$ examined.

\begin{table}
\centering
\begin{tabular}{|c|c|c|}
\hline
\multicolumn{2}{|c|}{Graphlet kernel} & $O(\tbinom{v}{k} N_k C^{\cong}_k)$\\ \hline \hline
\multirow{4}{*}{GSA-$\varphi$ with:} & $\varphi^{match}_k$ & $O(C_S s N_k C^{\cong}_k)$ \\
& $\varphi_{Gs}$ & $O(C_S s m k^2)$ \\ 
& $\varphi_{Gs+eig}$  & $O(C_S s (m k + k^3))$ \\ 
& $\varphi_{OPU}$  & $O(C_S s)$ \\ \hline
\end{tabular}
\caption{Per-graph complexities of GSA-$\varphi$.}
\label{tab:cost}
\end{table}

\section{Experiments}\label{sec:experiments}
\subsection{Datasets}\label{sec:setup}
%
Different methods are first compared in a controlled setting: a synthetic dataset generated by a \emph{Stochastic Block Model (SBM)} \cite{SBM}. We generate $300$ graphs, $240$ for training  and $60$ for testing. Each graph has $v=60$ nodes divided equally in six communities. Graphs are divided into two classes $\{0 , 1\}$. For each class we fix $p_{in}$ (resp. $p_{out}$) the edge probability between any two nodes in the same (resp. different) community. Also, to prevent the classes from being easily discriminated by the average degree, the pairs $(p_{in,i} , p_{out,i})_{i=0,1}$ are chosen such that nodes in both classes have the same expected degree (set to $10$). Having one degree of freedom left, we fix $p_{in,1}$ to $0.3$, and vary $r=(p_{in,1}/p_{in,0})$ the inter-class similarity parameter: the closer $r$ is to $1$, the more similar both classes are and the harder it is to discriminate them.

In addition, two real-world datasets are considered: D\&D \cite{DD_ref} (of size $n=1178$) and Reddit-Binary \cite{class_Reddit} ($n=2000$). We recall that graphs are classified based on their structure only and all other existing information is discarded. 
Python codes can be found here:  \url{https://github.com/hashemghanem/OPU_Graph_Classifier}. The OPU is developped by LightOn (\url{https://lighton.ai}). The rest of the experiments are performed on a laptop.



\begin{figure}
\centering
	%
	\includegraphics[width=4.4cm]{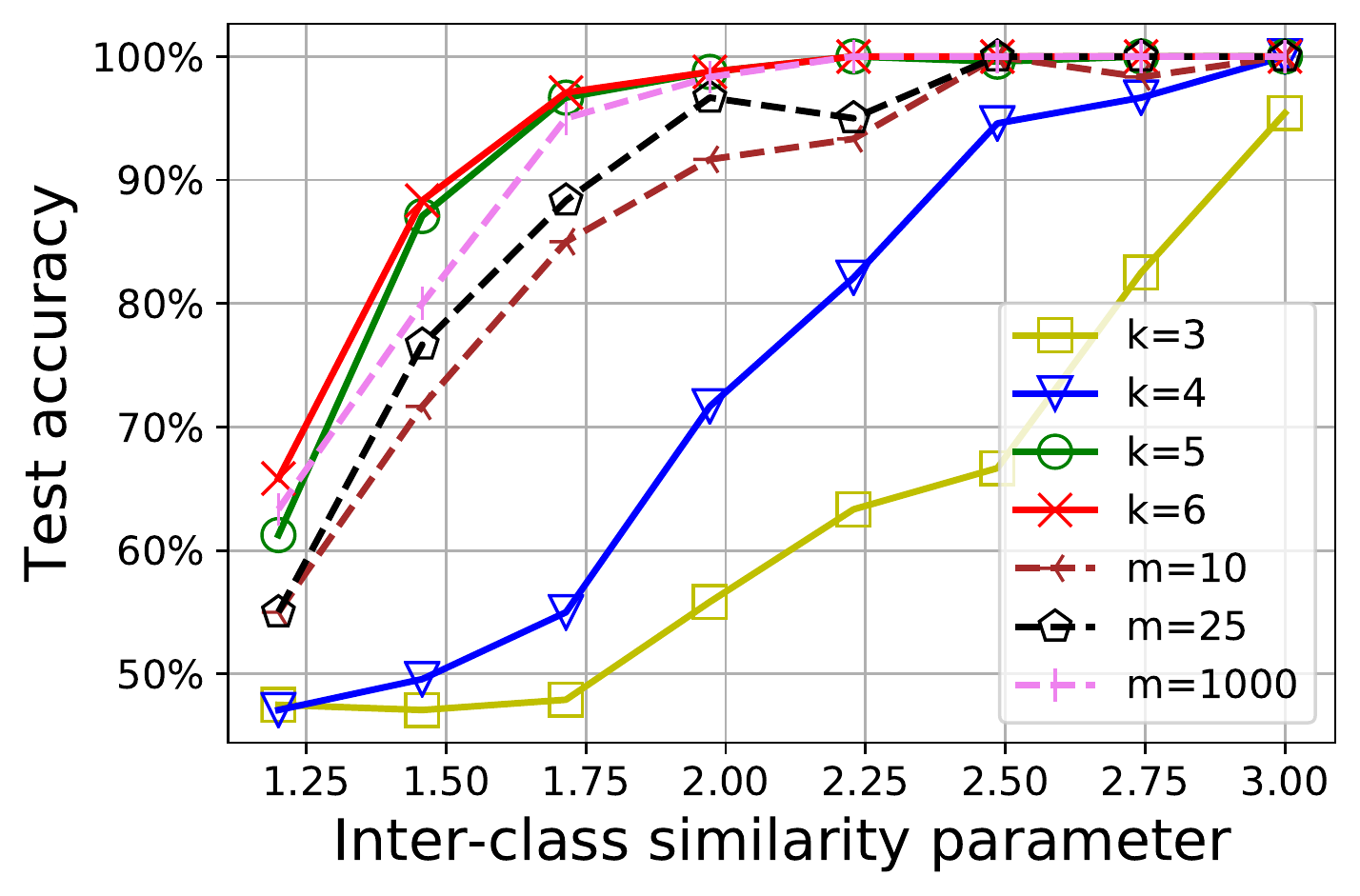}
%
		\includegraphics[width=4.4cm]{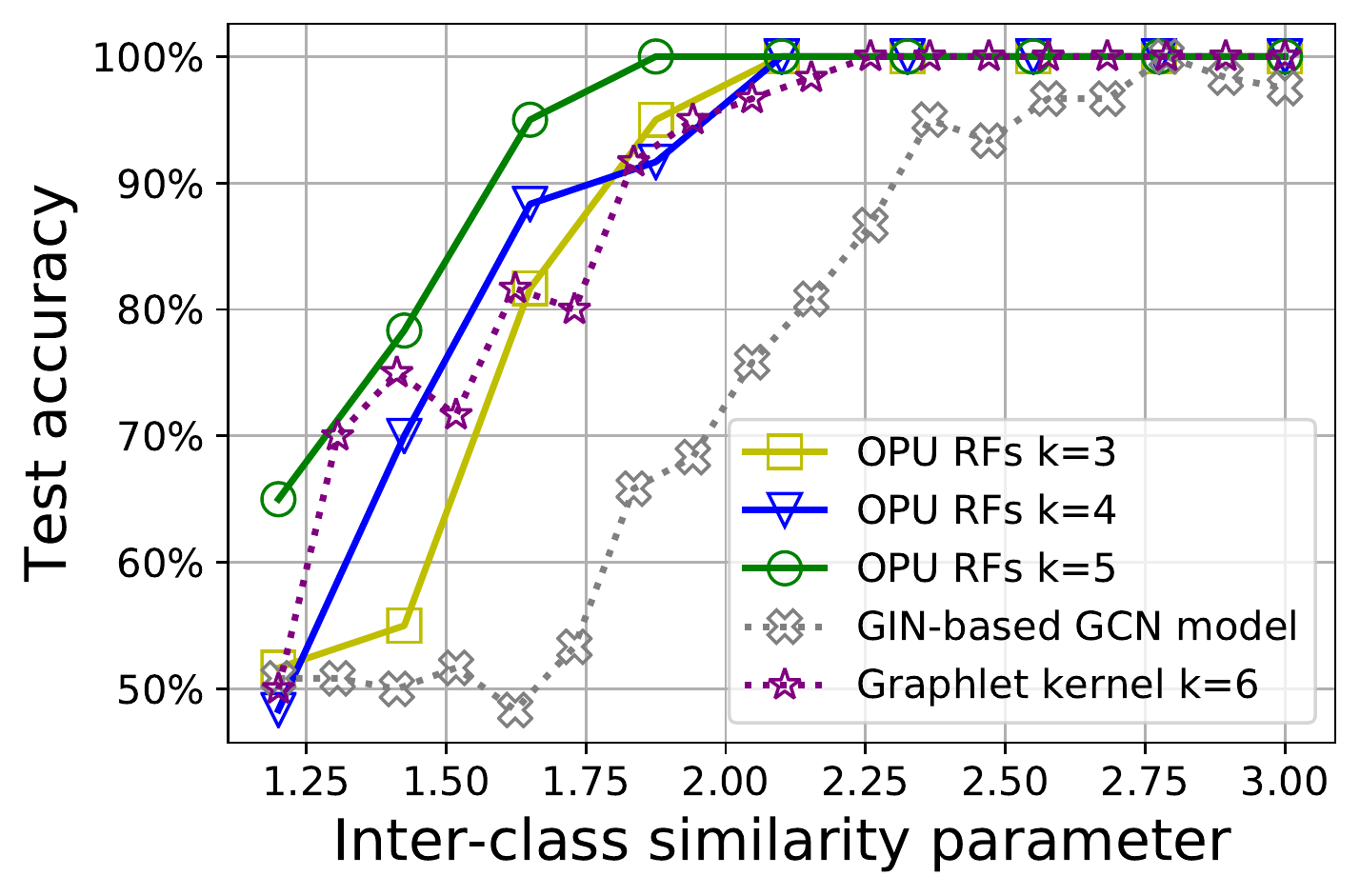}
	%
	\caption{(left)~$GSA-\varphi_{OPU}$ with uniform sampling ($s=2000$) for different $k$ ($m$ fixed to $5000$) and $m$ ($k$ fixed to $6$). (right)~Comparison between i/~$GSA-\varphi_{OPU}$ with RW sampling for different $k$; ii/~$GSA-\match$ with same number of samples and random features ($s=2000$, $m=5000$); iii/~the GNN consisting of 5 GIN layers \cite{GCN_powerful} followed by 2 fully connected layers, with dimension of hidden layers 4.
}
	\label{fig:GCN}
\end{figure}

\subsection{Varying $m, k$ and $S_k$ in $GSA-{\varphi_{OPU}}$}
From Fig. \ref{fig:GCN} (left), we observe that as $k$ and/or $m$ increase, the performance of $GSA-{\varphi_{OPU}}$ associated to uniform sampling increases, saturating in this SBM dataset for $m=5000$ and $k=6$. From the right figure, as expected, note that RW sampling outperforms uniform sampling: the smaller $k$, the larger the improvement.

\subsection{Choice of feature map $\varphi$}
\textbf{Comparison of random features}. Fig \ref{fig:diff_phi} (left) shows  that, for sufficiently large $m$, $GSA-\varphi_{OPU}$  outperforms both $GSA-\varphi_{Gs+Eig}$  and $GSA-\varphi_{Gs}$ (whose variance $\sigma^2$ is chosen so as to maximize the validation accuracy). 

\noindent\textbf{Comparing $GSA-\varphi_{OPU}$ and $GSA-\match$}. From Fig~\ref{fig:GCN} (right) we observe that with $s=2000$ and $m=5000$, $GSA-\varphi_{OPU}$ with both uniform sampling ($k=6$) and RW sampling ($k=5$)  clearly outperforms the graphlet kernel 
with $k=6$. 

\noindent\textbf{Computational time}. Fig \ref{fig:diff_phi} (right) compares computation time per subgraph, with respect to the subgraph size $k$. 
As expected, the execution time is exponential with $k$ for $GSA-\match$, roughly polynomial for $GSA-\varphi_{Gs}$ and $GSA-\varphi_{Gs+Eig}$, and constant for $GSA-\varphi_{OPU}$.


\subsection{Comparing $GSA-\varphi_{OPU}$ and a GNN model}\label{sec:vs_GIN}

In Fig \ref{fig:GCN}, we see that $GSA-\varphi_{OPU}$ with either RW sampling  ($k\geq4$) or uniform sampling ($k\geq5$) performs better than a deep GNN, specifically a model based on GIN layers~\cite{GCN_powerful}. GNNs are known to struggle in the absence of node features. 

\begin{figure}
\centering
	%
		\includegraphics[width=4.4cm]{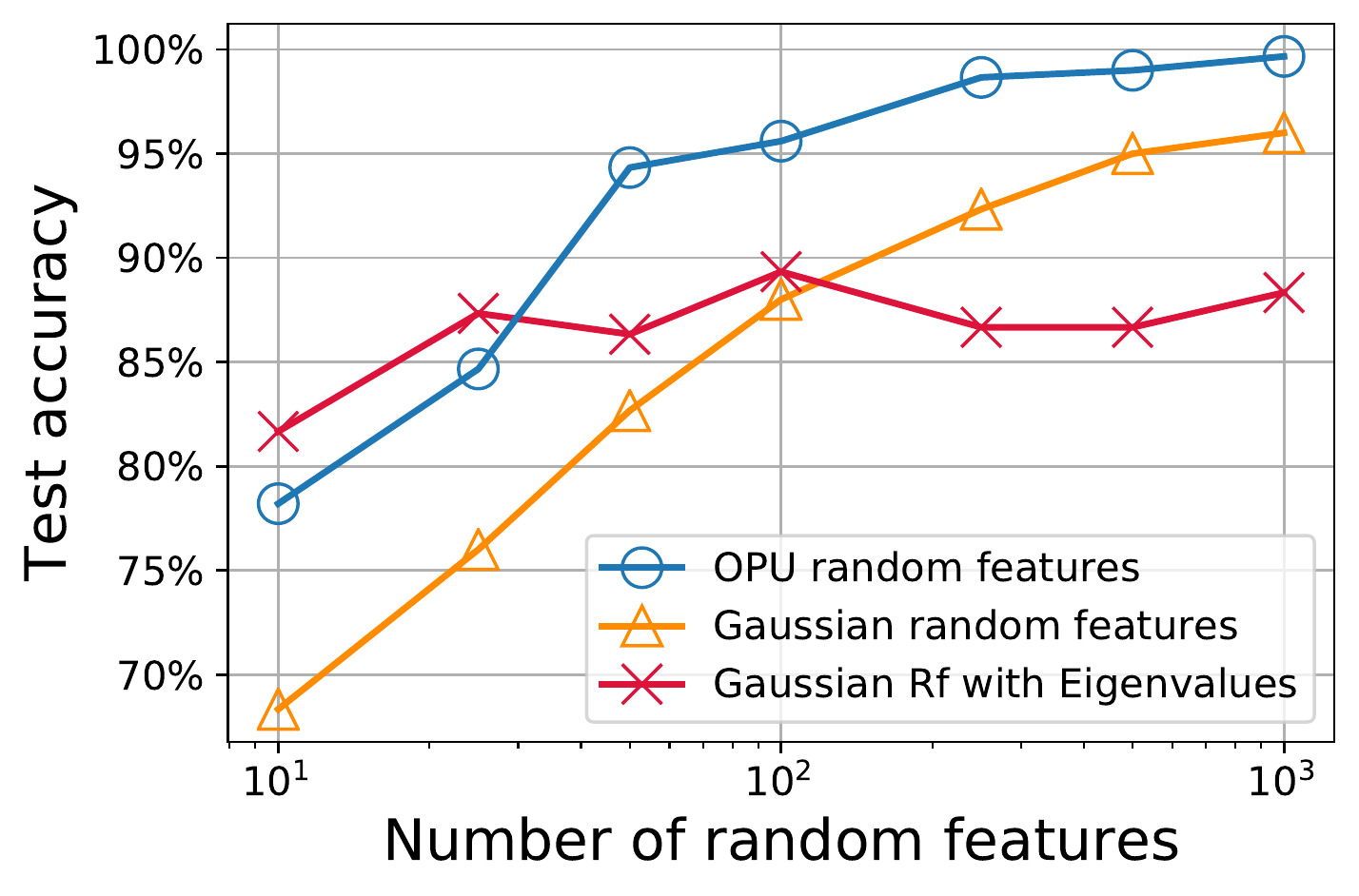}
		\includegraphics[width=4.4cm]{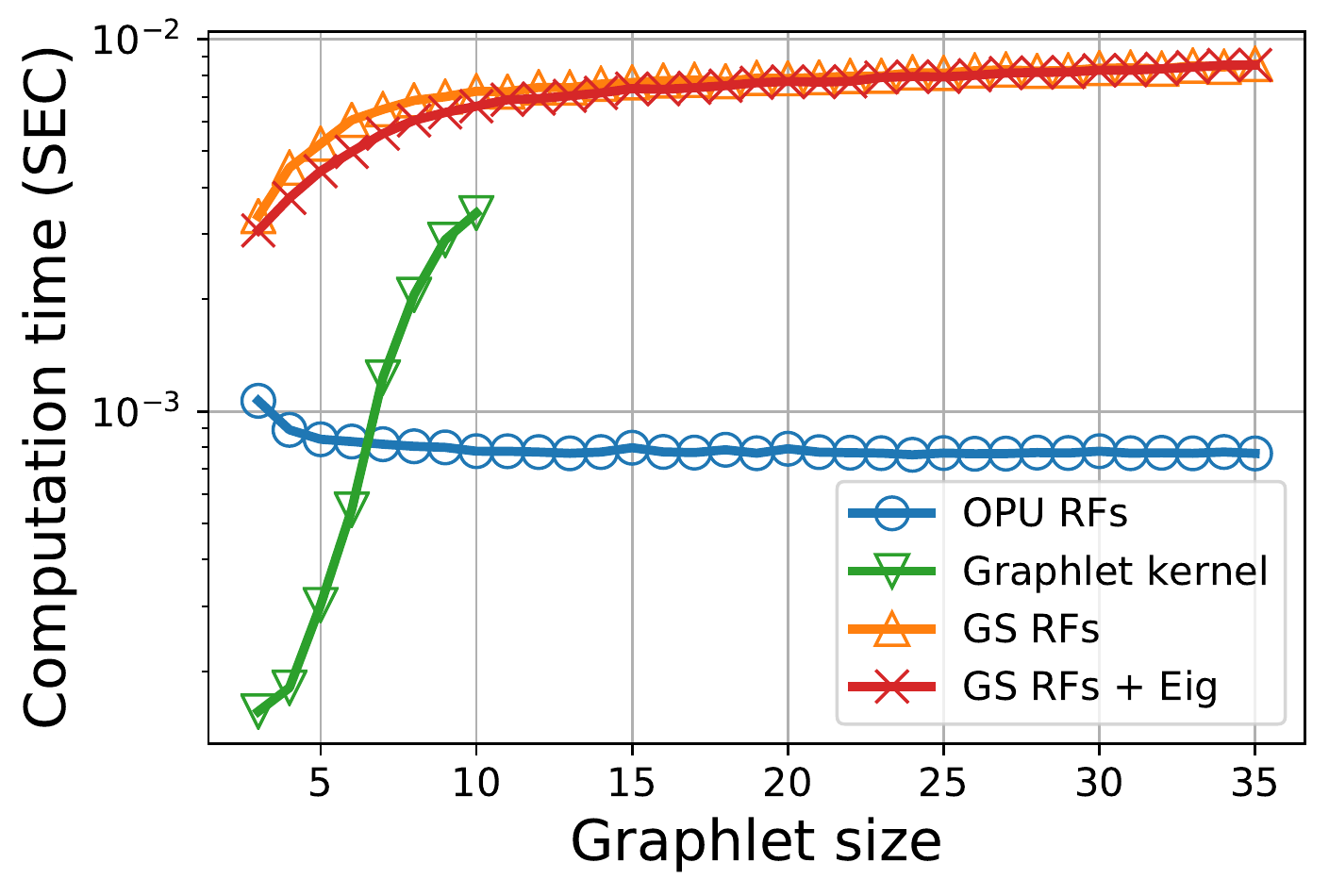}
	%
	\caption{ (left) Test accuracy versus $m$, for different maps  $\varphi$ in $GSA-\varphi$. (right) Computation time versus $k$ for $GSA-\varphi$ and the graphlet kernel. These figures are for $r=1.1$, $s=2000$, $m=5000$ and a Gaussian map variance $\sigma^2=0.01$.}
	\label{fig:diff_phi}
\end{figure}

\begin{figure}
\centering
%
  \includegraphics[width=4.4cm]{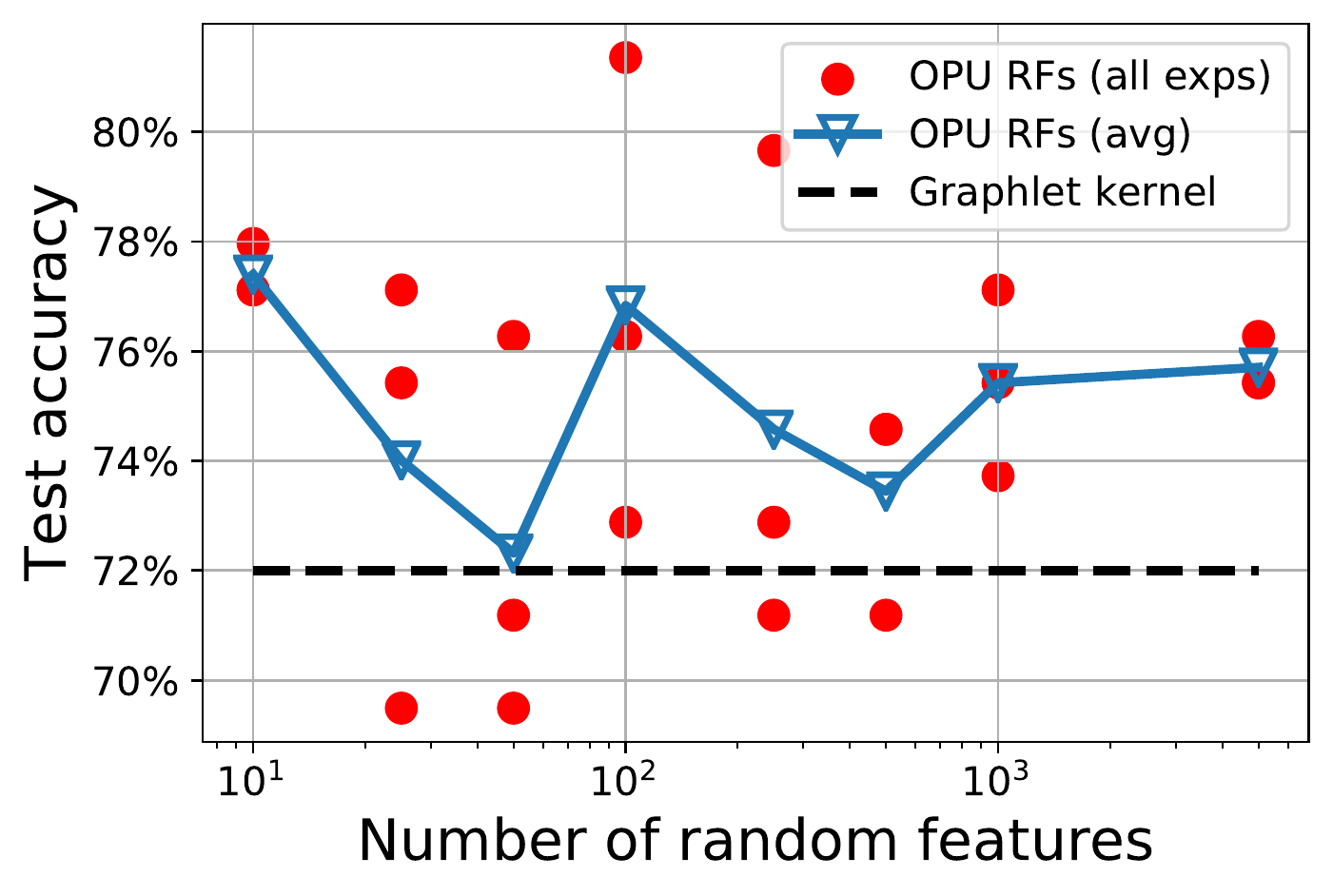}
  \includegraphics[width=4.4cm]{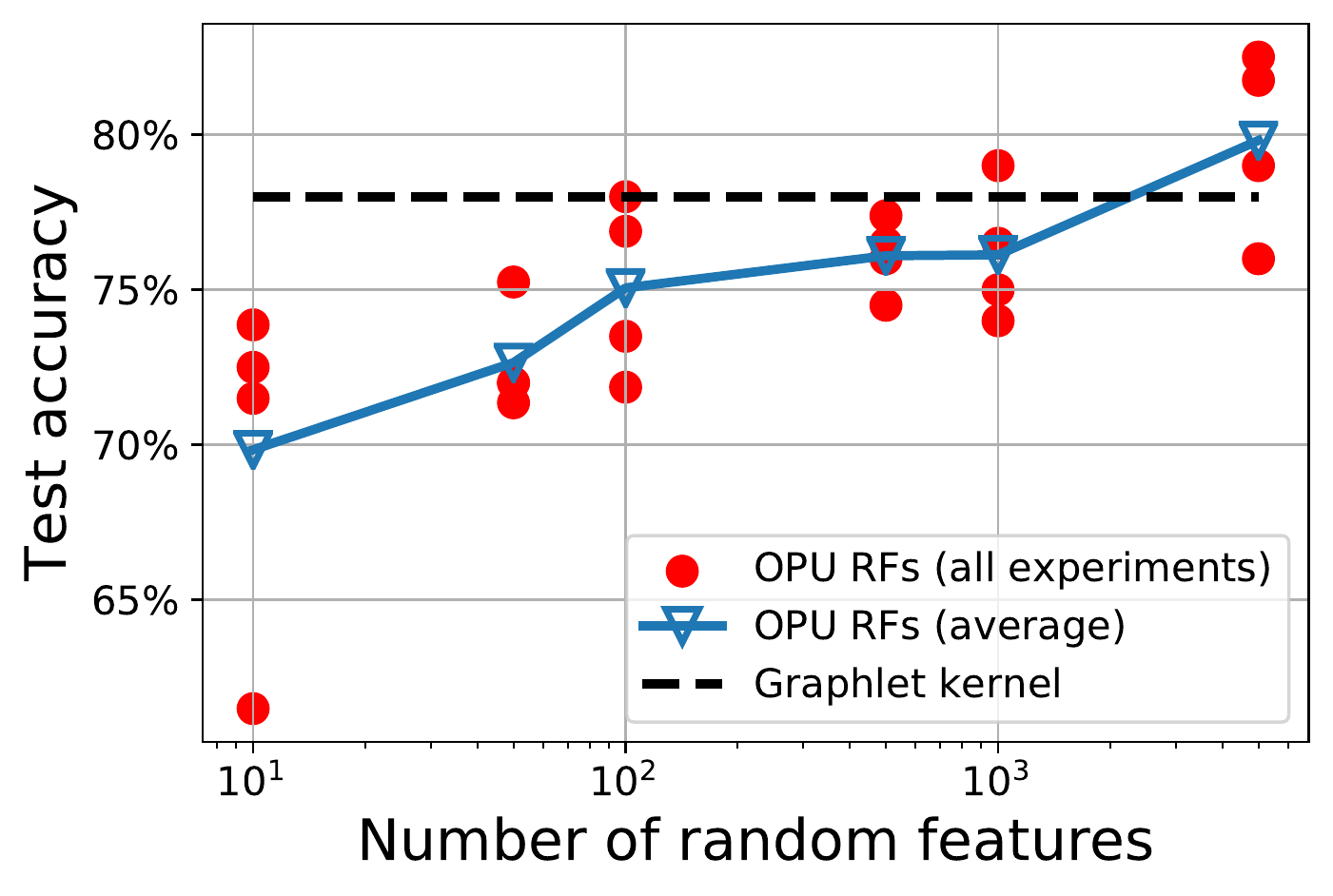}
%
\caption{ $GSA-\varphi$ vs the graphlet kernel on real datasets. (left)~D\&D. (right) Reddit-Binary. With $s=4000$, and $k=7$.}
\label{fig:DD}
\end{figure}

\subsection{$GSA-\varphi_{OPU}$  on real datasets}\label{sec:DD_Reddit}
Fig \ref{fig:DD} shows the test accuracy of $GSA-\varphi_{OPU}$ versus $m$, for two real datasets, compared with the graphlet kernel as a baseline. For each value of $m$ we conduct the experiment 3 times on D\&D and 4 times on Reddit-Binary dataset. For  D\&D, no steady improvement in the average accuracy is observed, however its variance between experiments decreases as $m$ increases. 
This average is still better than the accuracy obtained by $GSA-\match$. For Reddit-Binary, 
the average accuracy increases with the number of random features, and outperforms $GSA-\match$ for $m\geq5000$. 


\section{Conclusion}
\label{sec:Conclusion}

In this paper, we deployed OPUs random features in graph classification within the graphlet kernel framework. On the theoretical side, we showed concentration of the random embedding around an MMD metric between graphlet histograms, hinting at the potential of OPUs in such a combinatorial setting to reduce computation costs without information loss. On the practical side, our experiments showed that the proposed algorithm is significantly faster than the graphlet kernel with generally better performance. 
A major point left open is how to use the OPU in the presence of node features. A promising possibility is to integrate it within a message-passing framework to efficienty generate high-level embeddings that also use node features.
%
On the theoretical side, the properties of the MMD metric could be further analyzed on particular models of graphs such as SBMs.

\bibliographystyle{IEEEbib}

\bibliography{strings,refs}
\normalsize
\begin{appendices}
\section{Proof of Theorem 1}\label{app:proof}
\begin{proof} We decompose the proof in two steps.

\textbf{Step 1: infinite $s$, finite $m$.} First we define the random variables $x_j=| \mathbb{E}_{F \sim S_k(\G)} \xi_{\mathbf{w}_j}(F) - \mathbb{E}_{F' \sim S_k(\G')} \xi_{\mathbf{w}_j}(F') |^2$, which are: i/independent, ii/have expectation $MMD(S_k(\G),S_k(\G'))^2$, /iii are bounded by the interval $[0,4]$ based on our assumption $|\xi_w|\leq 1$. Thus, as a straight result of applying  Hoeffding's inequality with easy manipulation: with probability $1-\delta$
\begin{equation}
\label{eq:step1}
\Big|\frac{1}{m} \sum_{j=1}^m x_j- MMD(S_k(\G),S_k(\G'))^2 \Big| \leq\\ \frac{4 \sqrt{\log (2/\delta)}}{\sqrt{m}}
\end{equation}

\textbf{Step 2: finite $s$ and $m$.} For any \emph{fixed} set of random features $\{\mathbf{w}_j\}_{1,\ldots,m}$ and based on our previous assumptions we have: i/ $\varphi$ is in a ball of radius $M=\frac{\sqrt{m}}{\sqrt{m}}=1$, ii/ $ \mathbb{E}_{F \sim S_k(\G)}~ \varphi(F)= \mathbb{E}\left(~\frac{1}{{s}} \sum_i \varphi(F_i)~\right)$. Therefore, we can directly apply a vector version of Hoeffding's inequality \cite[Lemma 4]{Rahimi2009} on the vectors $\frac{1}{{s}} \sum_i \varphi(F_i)$ to get that with probability $1-\delta$:
\begin{equation}
    \label{eq:fixed_w}
    \left\|\mathbb{E}_{F \sim S_k(\G)} \varphi(F)-~\frac{1}{s} \sum_i \varphi(F_i)~\right\|\leq \frac{1+\sqrt{2\log\frac{1}{\delta}}}{\sqrt{{s}}}
\end{equation}
\vfill\pagebreak
Defining $J_{exp}(\G,\G')=\| \mathbb{E}_{F \sim S_k(\G)} \varphi(F) - \mathbb{E}_{F' \sim S_k(\G')} \varphi(F')\|$ and $J_{avg}(\G,\G')=\| \frac{1}{{s}} \sum_i \varphi(F_i) - \frac{1}{{s}} \sum_i \varphi(F'_i)\|$, then using triangular inequality followed by a union bound based on \eqref{eq:fixed_w}, we have the following with probability $1-2\delta$,
\begin{align*}
    \Big | J_{exp}(\G,\G') - J_{avg}(\G,\G')\Big | \leq \frac{2}{\sqrt{{s}}}\left(1+\sqrt{2\log\frac{1}{\delta}}\right)
\end{align*}

On the other hand, $ J_{exp}(\G,\G') + J_{avg}(\G,\G')\leq 4$, so with same probability:
\begin{equation}\label{eq:step2}
    \Big | J_{exp}(\G,\G')^2 - J_{avg}(\G,\G')^2 \Big | \leq \frac{8}{\sqrt{{s}}}\left(1+\sqrt{2\log\frac{1}{\delta}}\right)
\end{equation}
Since it is valid for any fixed set of random features, it is also valid with \emph{joint} probability on random features and samples, by the law of total probability.

Finally, combining \eqref{eq:step1}, \eqref{eq:step2} with a union bound and a triangular inequality, we have with probability $1-3\delta$,
\begin{align*}
\Big|\|\hat{\mathbf{f}}_{\G,S_k,\varphi} - \hat{\mathbf{f}}_{\G',S_k,\varphi} \|_2^2 - MMD(\G,\G')^2 \Big| \leq \\\frac{4 \sqrt{\log (2/\delta)}}{\sqrt{m}} + \frac{8}{\sqrt{{s}}}\left(1+\sqrt{2\log\frac{1}{\delta}}\right)
\end{align*}

which concludes the proof by taking $\delta$ as $\delta/3$.

\end{proof}
\end{appendices}
\end{document}